\definecolor{antiquebrass}{rgb}{0.8, 0.58, 0.46}
\algrenewcommand\algorithmicrequire{\textbf{Input:}}
\algrenewcommand\algorithmicensure{\textbf{Output:}}
\algnewcommand{\Initialize}[1]{%
  \State \textbf{Initialize:}
  \Statex \hspace*{\algorithmicindent}\parbox[t]{.8\linewidth}{\raggedright #1}
}
\renewcommand{\Return}{\State \textbf{return} }
\renewcommand{\ge}{\geqslant}
\renewcommand{\le}{\leqslant}
\newtheorem{definition}{Definition}
 \setlist{nolistsep,leftmargin=*}
\newcommand{\eps}{\ensuremath{\varepsilon}\xspace}
\renewcommand{\epsilon}{\eps}
\let\mydelta\delta
\renewcommand{\delta}{\ensuremath{\mydelta}\xspace}
\let\myalpha\alpha
\renewcommand{\alpha}{\ensuremath{\myalpha}\xspace}
\let\mystar\star
\renewcommand{\star}{\ensuremath{\mystar}}
\newcommand{\VC}{{\sc Vanilla Clustering}\xspace}
\newcommand{\IFC}{{\sc Individually Fair Clustering}\xspace}
\newcommand{\IFR}{{\sc Individually Fair Assignment}\xspace}
\newcommand{\IFA}{{\sc Individually Fair Assignment}\xspace}
\newcommand{\SPR}{{\sc Satisfactory-Partition}\xspace}
\newcommand{\TFC}{{\sc Trivially Fair Clustering}\xspace}
\newcommand{\el}{\ensuremath{\ell}\xspace}
\newcommand{\NP}{\ensuremath{\mathsf{NP}}\xspace}
\newcommand{\Pb}{\ensuremath{\mathsf{P}}\xspace}
\newcommand{\NPC}{\ensuremath{\mathsf{NP}}-complete\xspace}
\renewcommand{\AA}{\ensuremath{\mathcal A}\xspace}
\newcommand{\BB}{\ensuremath{\mathcal B}\xspace}
\newcommand{\EE}{\ensuremath{\mathcal E}\xspace}
\newcommand{\GG}{\ensuremath{\mathcal G}\xspace}
\newcommand{\VV}{\ensuremath{\mathcal V}\xspace}
\newcommand{\abs}[1]{\left| #1 \right|}
\newcommand{\set}[1]{\left\{ #1 \right\}}
\newcommand{\cost}{\mathrm{Cost}}
\newtheorem{theorem}{\bf Theorem}
\newtheorem{claim}{\bf Claim}
\crefname{theorem}{Theorem}{Theorems}
\crefname{observation}{Observation}{Observations}
\crefname{lemma}{Lemma}{Lemmas}
\crefname{corollary}{Corollary}{Corollaries}
\crefname{proposition}{Proposition}{Propositions}
\crefname{example}{Example}{Examples}
\crefname{claim}{Claim}{Claims}
\crefname{table}{Table}{Tables}
\crefname{equation}{Inequality}{Inequalities}
\crefname{reductionrule}{Reduction rule}{Reduction rules}
\crefname{section}{Section}{Sections}
\newcommand{\citet}[1]{\citeauthor*{#1}~\cite{#1}}
\title{Feature-based Individual Fairness in k-clustering}
\author{
    Debajyoti Kar\footnotemark[1]\\
    IIT Kharagpur, India\\
    \texttt{debajyoti.apeejay@gmail.com} \And
    Mert Kosan\footnotemark[1]\\
    University of California, Santa Barbara\\
    \texttt{mertkosan@ucsb.edu} \And
    Debmalya Mandal\\
    MPI-SWS, Germany\\
    \texttt{dmandal@mpi-sws.org} \And
    Sourav Medya\\
    University of Illinois, Chicagao\\
    \texttt{medya@uic.edu} \And 
    Arlei Silva\\
    Rice University, USA\\
    \texttt{arlei@rice.edu} \And
    Palash Dey\\
    IIT Kharagpur, India\\
    \texttt{palash.dey@cse.iitkgp.ac.in} \And
    Swagato Sanyal\\
     IIT Kharagpur, India\\
     \texttt{sanyalswagato@gmail.com}
}
\begin{document}

\maketitle

\footnote{Both authors contributed equally to this research.}

\begin{abstract}
Ensuring fairness in machine learning algorithms is a challenging and essential task. We consider the problem of clustering a set of points while satisfying fairness constraints. While there have been several attempts to capture group fairness in the $k$-clustering problem, fairness at an individual level is relatively less explored. We introduce a new notion of individual fairness in $k$-clustering based on features not necessarily used for clustering. We show that this problem is NP-hard and does not admit a constant factor approximation. Therefore, we design a randomized algorithm that guarantees approximation both in terms of minimizing the clustering distance objective and individual fairness under natural restrictions on the distance metric and fairness constraints. Finally, our experimental results against six competing baselines validate that our algorithm produces individually fairer clusters than the fairest baseline by 12.5\% on average while also being less costly in terms of the clustering objective than the best baseline by 34.5\% on average.
\end{abstract}

\section{Introduction}
Machine learning systems are increasingly being used in various societal decision-making, including predicting recidivism  \cite{ALMK16,Chouldechova2017}, deciding interest rates~\cite{FGRW20}, and even allocating healthcare resources~\cite{OPVM19}. However, beginning with the report on bias in recidivism risk prediction~\cite{ALMK16}, it has been known that such systems are often biased against certain groups of people. In recent years, various methods and definitions have been proposed for ensuring fairness in supervised learning settings, with efforts ranging from debiasing datasets~\cite{FFMS+15} to explicitly encoding the fairness constraints during the training of a classifier~\cite{ABDL+18}. 

This paper focuses on fairness in unsupervised learning, particularly clustering. 
There are two major reasons why clustering should be fair with respect to different subgroups. First, clustering is often a pre-processing step for generating new data representations for downstream tasks. 
Since we want the downstream decisions to be fair, the clustering step needs to be unbiased ~\cite{AFSV19}. 
Second, clustering is also used in various resource allocation problems, e.g. in \emph{facility location} \cite{JKL19}. 
Since it is desirable that no group is disproportionately affected by such decisions, there has been an increasing interest in designing clustering algorithms that are fair with respect to different subgroups \cite{CKLV17,BIOS+19,bera2019fair,AEKM19}.
Such group-fair clustering algorithms ensure that each protected group has an approximately equal presence in each cluster. 

Compared to group fairness, individually fair clustering has received less attention. Individually fair clustering is motivated by the \emph{facility location problem} where the goal is to open $k$ facilities while minimizing the total transportation cost between individuals and their nearest facility. If we choose $k$ facilities (or centers) uniformly at random, then each point $x$ could expect one of its nearest $n/k$ neighbors to be one of such facilities. This led a few studies \cite{JKL19,MV20,chakrabarty2021better} to consider the following notion of individual fairness. For a point $x$, let $r(x)$ be the radius such that the ball of radius $r(x)$ centered at $x$ has at least $n/k$ points. An individually fair clustering guarantees that, for every $x$, a cluster center is chosen from the $r(x)$-neighborhood of $x$.


Although individually fair clustering ~\cite{JKL19} provides guarantees for each point, 
it does not exactly reflect the original premise of individual fairness suggested by \cite{DHPR+12}, which requires that similar individuals should receive similar decisions. In the context of clustering, this means that two points $x$ and $y$ that are similar (in terms of features) should be clustered similarly. However, the definition proposed by \cite{JKL19} does not provide such a guarantee, as points similar to a point $x$ could be different from the points within a radius of $r(x)$ from $x$.

\paragraph{Proposed Definition of Individual Fairness.} In order to address the drawback above, we propose a new notion of individual fairness in clustering. 
First, motivated by the original definition of individual fairness in supervised learning \cite{DHPR+12}, we introduce a feature-based notion of individual fairness. We say that two individuals are similar if their features match significantly (parameterized by $\gamma$ in Definition \ref{def:gamma_sim}). Now, for each individual $v$, let $C(v)$ denote the cluster $v$ is assigned to. Then our feature-based individually fair clustering requires that $C(v)$ also contains at least $m_v$ individuals that are similar to $v$. The variable $m_v$ is a parameter to encode the degree of fairness. More specifically, it encodes the amount of similarity an individual seeks inside their own cluster. This guarantees that a point $v$ is not isolated in its own cluster but that the cluster has a desired representation (or participation) from points similar to it. Note that, the features that are used to compute similarity for individual fairness might not necessarily be used for clustering. In fact, these two sets of features might be disjoint. 


Our notion of individual fairness guarantees that similar individuals (in terms of possibly sensitive features) often share similar clusters. Consider the following motivating example. Ad networks collect user behavior data (e.g., browsing history, location) as well as possibly sensitive attributes (e.g., race and gender) to cluster users into several categories \cite{yan2009much}. These categories are directly used for targeted recommendations, including jobs and healthcare.  In this context, the cost function of the clustering algorithm should be based on user behavior while an individual’s notion of fairness should be based on sensitive attributes such as race and gender. In this case, similarity in terms of sensitive features can be seen as a relaxation of a protected group membership.



\paragraph{Contributions.} Our main contributions are as follows:
\begin{itemize}
    \item \textbf{Novel Formulation: }We propose a new definition of individual fairness in clustering based on how individuals are similar in terms of their features. Our definition guarantees that each individual has a desired level of representation of similar individuals in their own cluster.
    \item \textbf{Problem Characterization: }We show that minimizing the clustering cost subject to the new notion of individual fairness is NP-hard, and also cannot be approximated within a factor $\delta$ for any $\delta>0$.
    \item \textbf{Algorithm: }We design a randomized algorithm providing an additive approximation cost while guaranteeing fairness within a multiplicative factor with high probability.  
    \item \textbf{Experiments: } Our experiments on several standard datasets show that our approach produces by $34.5\%$ less cost on average in clustering than the best-competing method 
    while ensuring individual fairness for more than $95.5\%$ points on average.
\end{itemize}

\paragraph{Related Work.} \cite{CKLV17} first introduced the problem of fair clustering with disparate impact constraints and their goal was to ensure that all the protected groups have approximately equal representation in every cluster. Several works \cite{BDMS+19,RS18} studied different generalizations of the fair clustering problem. Furthermore, several papers 
\cite{BIOS+19,SSS18,HJV19} 
 proposed procedures to scale fair clustering to a large number of points. Although we consider individual fairness, our work is related to \cite{bera2019fair}, which shows that a $\rho$-approximation to the vanilla clustering problem can be converted to a $(\rho+2)$-approximate solution to fair clustering with bounded (and often negligible) violation of fairness constraints.

Our paper is focused on individual fairness, which was first defined by ~\cite{DHPR+12} in the context classification, and requires similar individuals to be treated similarly. For clustering problems, such a notion of individual fairness was first defined by \cite{JKL19}. 
They studied individual fairness in terms of the guarantee a randomly chosen set of $k$ points must satisfy. Informally, an individually fair clustering guarantees that for each point $x$, a cluster center is chosen from a certain neighborhood of $x$. \cite{MV20} designed a bicriteria approximation algorithm for individually fair $k$-means and $k$-median problems. Their algorithms guarantee that not only the fairness constraints are approximately satisfied, but also the objective is approximately maximized. Later, \cite{chakrabarty2021better} proposed an algorithm that has theoretical fairness guarantees comparable with \cite{MV20}, and empirically, obtains noticeably fairer solutions. Recently, \cite{VY21} designed improved bicriteria algorithms for general $\ell_p$-norm costs. Another recent study \cite{chhaya2022coresets} defined a coreset for individually fair clustering problem using the generalized fair radius notion, and \cite{chakrabarti2022new} used per-point fairness and aggregate fairness constraints for the k-center problem. Later they incorporated the price of fairness notion to combine these two constraints into one algorithm.

The definition of individual fairness in \cite{JKL19} was mainly motivated by fairness in the facility location problem. Recently, \cite{KAM20} considered a different notion of individual fairness in clustering, where the goal is to ensure that each point, is closer to the points in its own cluster than the points in any other cluster.  
Our proposed definition can be seen as a way to capture these two notions, as we consider feature-based similarity, as well as guaranteed representation for each point.

Here, we focus on the $\ell_p$-norm cost for clustering, which is just the sum of $\ell_p$-distances of each point from its corresponding cluster center. \cite{chakrabarti2022new} also considers $\ell_p$-norm objectives in their individual fairness formulation. However, several papers did consider other objectives in the context of group-fair clustering ~\cite{AEKM19,GSV21,KAM19}. Finally, our focus is on fair clustering algorithms, and there is extensive literature on fair algorithms for unsupervised~\cite{STMS+18,KSAM19} and supervised learning more broadly~\cite{ABDL+18,DOBS+18,CHKV19} . The coverage of these algorithms is out of the scope of the paper, and we refer the interested reader to the following excellent surveys:~\cite{CR18,SBFV+19} and \cite{MMSL+21}.

\section{Preliminaries}
We first introduce some necessary notations. Let $V$ be a set of $n$ points $V=\{1,2,\ldots, n\}$. We denote $\{S_1,S_2,\ldots,S_q\}$ as a set of $q$ features, where $S_i$ is the set of values for the $i$-th feature. We denote the tuple of $q$ features of the point $i$ by $X_i=(X_i^t)_{t\in[q]}$. We write $C=(C_i)_{i \in [k]}$ to denote a clustering (i.e. partition) of the set $V$ and $(c_i)_{i \in [k]}$ to denote the corresponding cluster centers. Given a clustering $C$ and a point $v$, let $\phi(v,C)$ be the cluster center assigned to the point $v$. When the clustering $C$ is clear from the context, we use $\phi(v)$ to denote the cluster center assigned to the vertex $v$. We are also given a distance function $d: V \times V \rightarrow \mathbb{R}$ that measures the distance between any pair of points. Given a clustering $C$, we can measure its cost through the distance metric $d$. In particular, we will be interested in measuring the sum of the $p$-th powers of distances from each point to its cluster center for $p \in \mathbb{N} \cup \set{0}$:
\begin{equation}\label{eq:p-cost}
\cost(C) = \sum_{v \in V} d(v, \phi(C,v))^p.
\end{equation}
We assume that the distance function $d$ depends on some features of the points but don't assume any relationship between those features and the ones used for fairness.

\subsection{Similarity}
In order to define the feature-based notion of individual fairness, we first define a similarity measure based on the features. To the best of our knowledge, all the existing notions of individual fairness in clustering only depend on the distance-based neighborhood of each point. In contrast, our definition of individual fairness builds upon the features of individual points that are not necessarily used for clustering. In order to define the feature-based notion of fairness, we first define a similarity measure based on the features.

In real-world settings, the fairness features can be both continuous and discrete. To handle both cases, we convert the discrete variables (features) into one-hot encoding vectors. The continuous variables are also normalized to be within the range $[0,1]$. After these conversions, we can now define the distance (or similarity) between two vectors. 
We convert the distance to similarity with the following where $d^\prime$ is a distance function on feature space:
\begin{equation}
    s(X_i, X_j) = e^{-d^\prime(X_i, X_j)}
\end{equation}
where $s$ is the similarity between $X_i$ and $X_j$ and $d^\prime$ is a distance function (e.g., Euclidean). This operation guarantees that $s$ will always generate a value between 0 and 1. We say that $X_i$ and $X_j$ are $gamma$ similar if $s > \gamma$.
\begin{definition}[\textbf{$\gamma$-similarity}]
\label{def:gamma_sim}
   For a parameter $\gamma\in[0,1]$, we say two points $i,j\in V, i\ne j$ are $\gamma$-similar if $s(X_i,X_j)>\gamma$ where $s(X_i, X_j) = e^{-d^\prime(X_i, X_j)}$. We assume that a point is not $\gamma$-matched with itself.  
\end{definition}

Our definition of similarity is flexible enough to support diverse applications. For any point $v$, we use $\Gamma(v)$ to denote the set of points in $V$ that are $\gamma$-similar to $v$. Next, we introduce our definition of individually fair clustering.


%

\subsection{Individual Fairness}
\begin{definition}[Individual Fairness in Clustering]
    Given a set $V$ of $n$ points along with a $q$-length feature vector $X_v=(x_v^1,\ldots,x_v^q)$ for every point $v\in V$, a similarity parameter $\gamma\in[0,1]$, an integer tuple $(m_v)_{v\in V}$, and an integer $k$, we say that a clustering $(C_i)_{i \in [\ell]}$ ($\ell \le k$) is 
    $(m_v)_{v\in V}$-individually fair if it satisfies the following constraint for every point $v \in V$:
    \begin{equation}\label{eq:p-ifc-constraint}
       \abs{\set{u : u \in \Gamma(v) \text{ and } \phi(u) = \phi(v)} } \ge m_v
    \end{equation}
      
\end{definition}
The fairness constraint~\eqref{eq:p-ifc-constraint} says that at least $m_v$ points that are $\gamma$-similar to point $v$ must belong to the cluster of $v$. Our goal is to cluster $V$ into $\el\; (\le k)$ clusters,  $(C_i)_{i\in[\el]}$, with corresponding centers (or facilities\footnote{We use cluster center and facility interchangeably.}) $(c_i)_{i\in[\el]}$, such that clusters are individually fair for every point and minimize the clustering cost (e.g. sum of the $p$-th powers of distances from cluster centers for some $p \in \mathbb{N} \cup \set{0}$). Formally, our \IFC problem is defined as follows.



\begin{definition}[\IFC (\textbf{IFC})]
    The input is a set $V$ of $n$ points with a $q$-length feature vector $X_v=(x_v^1,\ldots,x_v^q)$ for each $v\in V$, a similarity parameter $\gamma\in[0,1]$, an integer tuple $(m_v)_{v\in V}$,  a set $F$ of potential facilities. The objective is to open a subset $S\subseteq F$ of at most $k$ facilities, and find an assignment $\phi:V\longrightarrow S$ to minimize Cost$(C)$ satisfying the fairness constraints (eq., ~\ref{eq:p-ifc-constraint}). 
\end{definition}


The classical clustering problem, which we call \VC, is the same as the IFC problem except for the fairness requirements from Equation \ref{eq:p-ifc-constraint}.





\section{Results}

We present our main technical results in two directions. First, we provide several hardness results to show that the general \IFC (IFC) problem is hard even if one considers approximation. Then we contrast the hardness results by developing randomized approximation algorithms for various special cases of the IFC problem.

\subsection{Hardness Results}

In order to prove hardness results, we consider the decision version of the IFC problem, where the goal is to find a clustering whose cost is below a certain threshold. Note that, it is always possible to find a (trivial) individually fair clustering by one cluster containing all the points. However, the cost of such a fair clustering could be high, and we ask whether it is possible to beat the cost of such trivially fair clustering. As there can be multiple trivial fair clustering (depending on the cluster center chosen), we naturally pick the one minimizing the cost as the benchmark.

\begin{definition}[\TFC]
\label{defn:tfc_trivial}
    Given a set $V$ of $n$ points along with $q$-length feature vector $X_v = (x^1_v,\ldots,x^q_v)$ for every point $v \in V$, the trivially fair clustering puts all points in one cluster and picks the point as cluster center which minimizes the cost:
    $$
    \min_{f\in F} \sum_{v \in V} d(v, f)^p.
    $$
\end{definition}


We show that it is \NPC to compute if there exists a clustering better than \TFC by providing a reduction from \SPR, which is known to be \NPC~\cite{BTV06}.

\begin{definition}[\SPR]
    Given a graph $\GG=(\VV,\EE)$ and an integer $\lambda_v$ for every vertex $v\in \VV$, compute if there exists a partition $(\VV_1,\VV_2)$ of \VV such that
    \begin{enumerate}[i)]
        \item $\VV_1, \VV_2 \ne \emptyset$
        \item For every $i\in[2]$ and every $v\in\VV_i$, the number of neighbors of $v$ in $\VV_i$ is at least $\lambda_v$.
    \end{enumerate}
    We denote an arbitrary instance of \SPR by $(\GG,(\lambda_v)_{v\in \GG})$.
\end{definition}

\begin{theorem}
\label{thm:ifc_np_hard}
It is \NPC to decide whether an instance of \IFC admits a clustering of cost less than the \TFC even when there are only 2 facilities.
\end{theorem}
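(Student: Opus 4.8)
The plan is to reduce from \SPR, which is \NPC, and to argue membership in \NP\ separately (given a clustering one can verify its cost and the fairness constraints in polynomial time). Starting from an instance $(\GG,(\lambda_v)_{v\in\VV})$ with $\GG=(\VV,\EE)$ and $N=\abs\VV$, I would build an \IFC\ instance whose point set is $\VV$, set the fairness thresholds $m_v=\lambda_v$ for every $v$, take $k=2$, and engineer the two feature sets so that (i) the $\gamma$-similarity relation on the \emph{fairness} features reproduces the adjacency of $\GG$, i.e.\ $\Gamma(v)$ is exactly the set of neighbors of $v$ in $\GG$, while (ii) the \emph{clustering} features place the $N$ points at the vertices of a unit simplex (all pairwise distances equal to $1$), with the candidate facilities $F=\VV$ co-located with the points and $p=1$. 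The guiding idea is that, under this geometry, improving on the single-cluster cost is only possible by splitting $\VV$ into two nonempty groups, and such a split is feasible exactly when it is a satisfactory partition.

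To realize step (i) I would use an explicit one-hot construction that decouples the similarity gap from the vertex degrees. Let $D=\max_v\deg(v)$, and give each vertex a binary feature vector with one coordinate per edge together with $D-\deg(v)$ private padding coordinates, so that every vector has exactly $D$ ones; adjacent vertices then share exactly one edge-coordinate and non-adjacent vertices share none. A direct count gives squared Euclidean distance $2D-2$ for every adjacent pair and $2D$ for every non-adjacent pair, uniformly across all pairs. Choosing any threshold strictly between these two values, equivalently any $\gamma\in(e^{-\sqrt{2D}},e^{-\sqrt{2D-2}})$, makes two points $\gamma$-similar precisely when they are adjacent, so $\Gamma(v)$ is the neighborhood of $v$ and $\abs{\Gamma(v)}=\deg(v)$. (A valid metric assigning distance $1$ to edges and $2$ to non-edges achieves the same separation and could be used instead.)

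The cost bookkeeping is then immediate. A facility placed at a point is at distance $0$ from that point and $1$ from every other, so a single cluster costs exactly $N-1$, which is the \TFC\ value, whereas a partition of $\VV$ into two nonempty clusters $(C_1,C_2)$ with each center co-located with a point of its cluster costs $(\abs{C_1}-1)+(\abs{C_2}-1)=N-2$; moreover any clustering of cost below $N-1$ must use two nonempty clusters. Hence beating \TFC\ is equivalent to the existence of a fair partition of $\VV$ into two nonempty parts. I would finish by matching feasibility to the \SPR\ condition: a two-cluster solution is $(m_v)_{v\in V}$-individually fair iff every $v$ has at least $m_v=\lambda_v$ $\gamma$-similar points in its own cluster, i.e.\ at least $\lambda_v$ neighbors in its own part, which is exactly the satisfactory-partition requirement. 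Therefore the \IFC\ instance admits a fair clustering cheaper than \TFC\ iff $\GG$ has a satisfactory partition. (The degenerate case $\lambda_v>\deg(v)$ for some $v$ is consistent on both sides: \SPR\ is then a \NO\ instance, and the \IFC\ instance has no fair clustering at all.)

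The main obstacle I anticipate is step (i): forcing the feature-based $\gamma$-similarity to reproduce an \emph{arbitrary} graph's adjacency, since a naive Euclidean embedding of a graph as a threshold graph need not exist. The padding trick above is what makes the adjacent and non-adjacent distances take only two fixed values independent of the vertices, which is precisely what lets a single threshold $\gamma$ separate edges from non-edges cleanly. The remaining care is to confirm that the simplex cost gadget makes the cost improvement depend only on \emph{whether} the points are split into two nonempty groups and not on \emph{which} split is chosen, so that the optimization is driven entirely by the fairness constraints and hence by the \SPR\ condition.
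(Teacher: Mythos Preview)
Your reduction from \SPR and the idea of encoding the graph's adjacency through the fairness features is exactly the paper's high-level strategy. The gap is in the cost gadget. The theorem asserts hardness ``even when there are only 2 facilities,'' i.e.\ $\abs{F}=2$, and the paper's proof indeed takes $F=\{l,r\}$. Your construction instead sets $F=V$ with $\abs{F}=N$ and only restricts $k=2$. That is strictly weaker, and the simplex gadget genuinely relies on the large facility set: to get cost $N-2$ you must place a center \emph{inside} each nonempty part, which is always possible when every point is a candidate facility. With only two pre-specified facilities you have no control over which side of a given satisfactory partition of $\GG$ they land on; if both fall in the same part, the partition yields no cost improvement under any symmetric geometry like the simplex.

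The paper handles $\abs{F}=2$ by an asymmetric distance assignment: it chooses $d(\cdot,l)$ and $d(\cdot,r)$ so that $\sum_{v} d(v,l)^p=\sum_{v} d(v,r)^p=A$, yet for every nontrivial $X\subsetneq V$ one has $\sum_{v\in X}d(v,l)^p\neq\sum_{v\in X}d(v,r)^p$. Then for any satisfactory partition $(X,\bar X)$ the two opposite assignments ($X\to l$ versus $X\to r$) have costs summing to $2A$ but differing, so one of them is strictly below the \TFC cost $A$. Some device of this kind is unavoidable when $\abs{F}=2$; your symmetric simplex cannot be patched to two fixed facilities without it.

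As an aside, your padding trick in the similarity encoding is actually more careful than what the paper does: with bare edge-incidence vectors (no padding) the Hamming distance between $u_i$ and $u_j$ is $\deg(v_i)+\deg(v_j)$ or $\deg(v_i)+\deg(v_j)-2$, so on irregular graphs a single threshold $\gamma$ need not separate edges from non-edges. Normalizing every vector to $D$ ones, as you do, removes this degree dependence.
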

Please see the proof in the Supplementary.\\

Given the NP-completeness result, we explore the possibility of approximation for the \IFC (IFC) problem. However, the next theorem shows that IFC is inapproximable within factor $\delta$ for any $\delta>0$.

\begin{theorem}
\label{thm:noapprox}
Distinguishing between instances of the IFC problem having zero and non-zero optimal costs is NP-complete even when there are 2 facilities. Hence, for any computable function $\delta$, there does not exist a $\delta$-approximation algorithm for IFC\ unless P=NP.
\end{theorem}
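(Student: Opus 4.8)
The plan is to obtain the inapproximability as an immediate consequence of the first sentence, namely that deciding whether an \IFC instance admits a fair clustering of cost exactly $0$ is \NPC. Granting that, suppose some polynomial-time algorithm returned, for every instance, a feasible (i.e.\ fair) clustering of cost at most $\delta\cdot\mathrm{OPT}$ for a computable $\delta$. On an instance with $\mathrm{OPT}=0$ it must return a fair clustering of cost $\le\delta\cdot 0=0$, hence cost $0$; on an instance with $\mathrm{OPT}>0$ it returns cost $\ge\mathrm{OPT}>0$. Thus the sign of the returned cost decides the \NPC problem above, forcing $\Pshort=\NP$. This argument is insensitive to the size or value of $\delta$, precisely because $\delta\cdot 0=0$. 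Membership in \NP is clear: a zero-cost fair clustering is a polynomial certificate, since one checks in polynomial time that every point lies at distance $0$ from its assigned center and that every fairness constraint~\eqref{eq:p-ifc-constraint} holds.

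For hardness I would reduce from \SPR, reusing the skeleton behind \cref{thm:ifc_np_hard} but re-engineering the distances so that a \emph{satisfactory} partition corresponds to a clustering of cost exactly $0$. Keeping the two facilities $l,r$ and one point $u_i$ per vertex $v_i$, I set $d(u_i,l)=d(u_i,r)=0$ so that each $u_i$ may join either cluster for free, and declare $u_i,u_j$ to be $\gamma$-similar iff $v_iv_j\in\EE$. A clustering then has cost $0$ iff it realizes a bipartition of the vertices, and constraint~\eqref{eq:p-ifc-constraint} with $m_{u_i}=\lambda_{v_i}$ says exactly that each vertex keeps at least $\lambda_{v_i}$ neighbours on its own side. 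The one thing this does not yet enforce is that \emph{both} sides are nonempty, which is what makes \SPR nontrivial.

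To supply that, I would add two anchor points $a_l,a_r$ with $d(a_l,l)=0,\ d(a_l,r)=1$ and $d(a_r,r)=0,\ d(a_r,l)=1$, so that at cost $0$ the point $a_l$ is forced into the $l$-cluster and $a_r$ into the $r$-cluster. Declaring each anchor $\gamma$-similar to every $u_i$ and setting $m_{a_l}=m_{a_r}=1$ forces at least one vertex onto each side, i.e.\ both parts nonempty; to keep the vertex constraints intact I would raise the thresholds to $m_{u_i}=\lambda_{v_i}+1$, the extra unit being absorbed by the single anchor that always sits in $u_i$'s cluster. A short calculation then gives the equivalence, still with only two facilities: a cost-$0$ fair clustering exists iff the \SPR instance is a \YES instance. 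I would also observe that the instance is always feasible with positive cost---placing every point in a single $l$-centred cluster is fair (using $\lambda_{v_i}\le\deg(v_i)$, which we may assume) and costs $d(a_r,l)^p>0$---so a \NO instance has strictly positive optimum rather than being infeasible, which is what lets us speak of ``non-zero optimal cost''.

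The main obstacle is the interplay in the previous paragraph: securing both-sides-nonempty for free while not corrupting the per-vertex counts, and guaranteeing a positive-cost fair fallback in the \NO case. A related subtlety worth flagging is that a genuine metric would make the zero-cost assignment of each point \emph{forced}---two distinct facilities cannot both be at distance $0$ from a common point---collapsing the search and defeating hardness; the construction therefore leans on the fact that the \IFC definition only posits a distance function $d$, evaluated here at point--facility pairs, rather than a full metric.
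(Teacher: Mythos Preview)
Your argument is correct, and the high-level strategy---reduce from \SPR so that satisfactory partitions correspond exactly to fair clusterings of cost $0$---matches the paper's. The implementations differ, though. The paper does not use anchor gadgets; instead it builds a Turing reduction with $n-1$ \IFC instances $I_1,\ldots,I_{n-1}$, where in $I_i$ the only nonzero distances are $d(u_1,l)=1$ and $d(u_{i+1},r)=1$. A cost-$0$ fair clustering in $I_i$ then forces $u_1$ to $r$ and $u_{i+1}$ to $l$, so it exists iff \GG admits a satisfactory partition separating $v_1$ from $v_{i+1}$; running the distinguisher on all $n-1$ instances decides \SPR. Your construction instead adds two anchor points $a_l,a_r$ with asymmetric distances and universal similarity, bumps each $m_{u_i}$ by one to absorb the anchor contribution, and thereby obtains a single many-one reduction. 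The trade-off: your route is a genuine Karp reduction (so it literally gives \NP-completeness rather than \NP-hardness under Turing reductions) and needs only one instance, at the price of the anchor bookkeeping and the need to realize a slightly richer similarity graph via features; the paper's route avoids gadgets entirely but pays with $n-1$ oracle calls. Your observation that the construction inherently abandons the metric structure is also made in the paper, which immediately follows this theorem with a separate (weaker) inapproximability result under triangle inequality.
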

\begin{proof}
Let $\AA$ be a deterministic polynomial time algorithm that distinguishes between instances of IFC with 0 and non-zero optimal costs. We use $\AA$ to build an algorithm for \SPR. Let $(\GG=(\VV=\{v_1,v_2,\ldots,v_n\},\EE), (\lambda_v)_{v\in\VV})$ be an instance of \SPR. We create $n-1$ instances $I_1,I_2,\ldots,I_{n-1}$ of IFC as follows: the set of points is $U=\{u_1,\ldots,u_n\}$ and $m_{v_i}=\lambda_{v_i}$ for every $i\in[n]$ for all the instances; for instance $I_i, i\in[n]$, we introduce $2$ facilities $l$ and $r$ and define distances as follows:
\[
  d(u_j,l) =
  \begin{cases}
                                    1 & \text{$j=1$} \\
                                   0 & \text{$j\in \{2,3,\ldots,n\}$} 
  \end{cases}
\]
\[
  d(u_j,r) =
  \begin{cases}
                                    1 & \text{$j=i+1$} \\
                                   0 & \text{$j\in \{1,2,\ldots,n\} \setminus \{i+1\}$} 
  \end{cases}
\]
We define feature vectors of every point similar to \Cref{thm:ifc_np_hard} to realize the above distances.
We now run $\AA$ on each of these instances. If $\AA$ finds any instance in $\{I_1, \ldots, I_{n-1}\}$ to have zero optimal cost, then we return yes for the \SPR instance; otherwise, we return no for the \SPR instance.

Clearly, the above algorithm runs in polynomial time. We now prove its correctness. If $\GG$ does not have a non-trivial satisfactory partition, then clearly every instance in $I_1, …, I_{n-1}$ has an optimal cost of one. On the other hand, if $\GG$ has a non-trivial satisfactory partition, say $(X,\Bar{X})$, then we claim that at least one of $I_1, …, I_{n-1}$ has optimal cost 0. Without loss of generality, we assume  that we have $v_1 \in X$. Let $v_j \in \Bar{X}$, for some $j \in \{2,3,\ldots,n\}$. Then clearly OPT$(I_{j-1})=0$ (assigning all the corresponding vertices in $X$ to the cluster center $r$ and all other vertices to the cluster center $l$). Thus, the algorithm is correct.
\end{proof}

Note that the distances in the above reduction do not satisfy the triangle inequality. If we insist that distances must satisfy the triangle inequality, then we have the following (weaker than \Cref{thm:noapprox}) inapproximability result.

\begin{theorem}
\label{thm:fptas_triangle}
There does not exist any FPTAS for the IFC problem when the distances in the input satisfy triangle inequality unless $\Pb=\NP$.
\end{theorem}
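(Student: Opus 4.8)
The plan is to recycle the reduction from \SPR built in the proof of \Cref{thm:ifc_np_hard}, which already yields distances obeying the triangle inequality, and to exploit the fact that there the optimal cost is integer-valued and bounded by a polynomial in the instance size. The general principle I would invoke is the standard one for ruling out an FPTAS: an \NPH minimization problem whose optimum is an integer bounded by a polynomial in the input length admits no FPTAS unless $\Pb=\NP$, because a sufficiently fine multiplicative approximation then determines the optimum exactly, and hence decides the underlying \NP-hard question in polynomial time.

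First I would extract the structural facts of that construction. With the two facilities $l$ and $r$, each distance is the $p$-th root of an integer, so every quantity $d(\cdot,\cdot)^p$ is an integer, every clustering has integer cost, and in particular the optimum $\mathrm{OPT}$ is an integer. Writing $A=\sum_{v}d(v,l)^p=\sum_{v}d(v,r)^p$ for the common cost of the two trivial clusterings, the case analysis there shows that a \YES instance of \SPR produces a non-trivial fair clustering of cost strictly below $A$, hence $\mathrm{OPT}\le A-1$ by integrality, while a \NO instance admits only trivial fair clusterings and therefore has $\mathrm{OPT}=A$. Next I would bound $A$: taking $\beta$ to be the smallest integer meeting the requirement $\beta\ge(\lceil n/2\rceil+1)/2$ keeps $\beta=O(n)$ and makes all $p$-th powers integers, giving $A=O(n^2)$, polynomial in the size of the \SPR instance.

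Given a hypothetical FPTAS for IFC under the triangle inequality, I would run it on this instance with accuracy $\epsilon=1/A$; since $1/\epsilon=A=\mathrm{poly}(n)$, the overall running time stays polynomial. The returned clustering has cost $c\le(1+\epsilon)\mathrm{OPT}$. For a \NO instance this gives $c\ge\mathrm{OPT}=A$, whereas for a \YES instance $c\le(1+\epsilon)(A-1)=A-1/A<A$. Consequently, testing the returned cost against the threshold $A$ decides \SPR exactly in polynomial time, contradicting the \NP-hardness of \SPR unless $\Pb=\NP$, which establishes the claim.

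I do not expect a single deep obstacle, but rather a bookkeeping one: the crux is checking that the two separating properties, namely integrality of $\mathrm{OPT}$ and the polynomial bound on $A$, hold simultaneously for a valid integral choice of $\beta$, and that the unit additive \YES/\NO gap survives the factor $(1+\epsilon)$. The one subtle point to confirm is that in the \NO case the only fair clusterings are the trivial ones, so that $\mathrm{OPT}=A$ rather than something smaller; this follows because any non-trivial fair clustering over the two facilities corresponds exactly to a satisfactory partition of $\GG$, which does not exist for a \NO instance.
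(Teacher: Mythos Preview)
Your argument is correct, but it follows a different route from the paper's. The paper does not reuse the single instance from \Cref{thm:ifc_np_hard}; instead it mirrors the multi-instance construction of \Cref{thm:noapprox}, building $n-1$ IFC instances $I_1,\ldots,I_{n-1}$ in which the distances are shifted by an additive $\beta\ge 1/2$ (so that triangle inequality holds), each instance having trivial cost $1+n\beta$ and, whenever a satisfactory partition exists, at least one instance having optimal cost exactly $n\beta$. Running the hypothetical FPTAS with $\epsilon=1/(2n\beta)$ on all $n-1$ instances then separates the two cases. Your approach is the standard ``integer optimum bounded by a polynomial rules out an FPTAS'' template applied to a \emph{single} instance: by choosing $\beta$ integral you force every clustering cost to be an integer, so the \YES/\NO gap is at least $1$ in an optimum of size $A=O(n^2)$, and one call with $\epsilon=1/A$ decides \SPR. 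This is arguably tidier (one instance rather than $n-1$, and no new distance gadget), while the paper's version has the small advantage that the gap is made explicit ($n\beta$ versus $1+n\beta$) without appealing to integrality. Your verification that in the \NO case only trivial clusterings are fair, hence $\mathrm{OPT}=A$, is exactly the point that makes the single-instance argument go through.
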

Please see the proof in the Supplementary.

\subsection{Algorithmic Results}
\label{sec::algo_results}


Given the strong inapproximability results in the previous section, we aim to develop approximation algorithms for \IFC under suitable conditions. First, we develop an approximation algorithm for \IFA  (\cref{thm:approx_algo_lp}). Next, we show how to obtain an algorithm for \IFC of similar guarantee (\cref{thm:final-algo}). Bera et al.~\cite{bera2019fair} designed an approximation algorithm for group fair clustering from an algorithm for group fair assignment. We follow a similar approach to individual fairness.


One of the main ingredients of our technical results is the \IFA problem, which, given a set of $k$ potential cluster centers, determines an assignment of the points i.e. which point should be assigned to which cluster center. Formally, it is defined as follows:
\begin{definition}[\IFR (\textbf{IFA})]
    Given a set $V$ of $n$ points along with a $q$-length feature vector $X_v=(x_v^1,\ldots,x_v^q)$ for every point $v\in V$, a similarity parameter $\gamma\in[0,1]$, an integer tuple $(m_v)_{v\in V}$, and a set $F=\{f_1,\ldots,f_k\}$ of $k$ facilities, an $(m_v)_{v\in V}$-fair assignment finds the optimal cost-minimizing assignment satisfying the fairness constraints ~\eqref{eq:p-ifc-constraint}. 
\end{definition}


\textbf{Our Algorithm, LP-FAIR: } Algorithm \ref{alg:ifa} describes our randomized approximation algorithm for IFA. The linear program (LP) in \cref{eq:lp-ifa} is a relaxation of the IFA problem. It has a variable $x_{v,f_k}$ for each vertex $v$ and facility $f_k$. In an (integral) ``solution" the variable $x_{v,f_k}$ takes value $1$ if and only if the point $v$ is assigned to the facility $f_k$. After solving the LP, Algorithm~\ref{alg:ifa} determines the assignment $\phi$ by assigning point $v$ to $f_k$ with probability $x^*_{v,f_k}$. Finally, the above procedure is repeated $\log_{1+\delta} n$ times and the assignment with the lowest cost is returned to boost the success probability.
\begin{algorithm}[!htbp]
 \caption{LP-FAIR, Algorithm for \textbf{IFA} \label{alg:ifa}}
  \begin{algorithmic}[1]
    \Require{$(V, (X_v)_{v\in V}, \gamma, (m_v)_{v\in V}, k)$, and $\delta$.}
    \For{$t = 1,2,\ldots,T = \log_{1+\delta} n$}
    \State{Solve the following LP to get solution $x^\star_t$.}
    \begin{align}\label{eq:lp-ifa}
    \begin{split}
    \min_{x} 
    &\sum\limits_{v\in V} \sum\limits_{f_k\in F} d(v,f_k)^p\cdot x_{v,f_k} \\
    \textrm{s.t.} &\sum\limits_{u\in \Gamma(v)} x_{u,f_k} \ge m_v \cdot x_{v,f_k} \ \forall v\in V, f_k\in F\\
    &\sum\limits_{f_k\in F} x_{v,f_k} = 1 \ \forall v\in V\\
    &x_{v,f_k} \ge 0 \ \forall v\in V, f_k\in F
    \end{split}
    \end{align}
    \For{each $v \in V$}
    
        \State{Set $\phi_t(v)= f_k$ with probability $x^\star_{t,v,f_k}$.}
    \EndFor
    \EndFor
\Return Assignment $\phi^\star$ with the minimum cost.
  \end{algorithmic}
\end{algorithm}


The next theorem presents probabilistic approximation guarantees provided by Algorithm~\ref{alg:ifa}.
\begin{theorem}
\label{thm:approx_algo_lp}
For any $\epsilon, \delta > 0$, 
there exists a randomized algorithm for IFA running in time polynomial in $n$ and $\frac{1}{\delta}$, that outputs a solution of cost at most $(1+\delta)$OPT where each vertex $v$ has at least $\frac{m_v}{k}(1-\epsilon)$ $\gamma$-similar points assigned to the same facility with high probability if $m_v = \Omega (\frac{k \log n}{\epsilon^2})$, $\forall v\in V$.
\end{theorem}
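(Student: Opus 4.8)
The plan is to analyze Algorithm~\ref{alg:ifa} in three stages: first show the program~\eqref{eq:lp-ifa} is a relaxation and control the cost of a single rounding, then control the fairness of a single rounding, and finally combine these over the $T$ repetitions. \emph{Relaxation and cost.} I would first verify that~\eqref{eq:lp-ifa} relaxes IFA: every integral assignment $\phi$ yields a feasible $0/1$ point $x$, since when $x_{v,f_k}=1$ the constraint $\sum_{u\in\Gamma(v)}x_{u,f_k}\ge m_v x_{v,f_k}$ becomes exactly the fairness requirement~\eqref{eq:p-ifc-constraint}; hence the optimal LP value is at most the optimal integral cost $\mathrm{OPT}$. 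Because the rounding sets $\phi_t(v)=f_k$ independently with probability $x^\star_{v,f_k}$, linearity of expectation gives $\mathbb{E}[\cost(\phi_t)]=\sum_{v}\sum_{f_k}d(v,f_k)^p\,x^\star_{v,f_k}$, which equals the LP value and is therefore at most $\mathrm{OPT}$. Markov's inequality then gives $\Pr[\cost(\phi_t)>(1+\delta)\mathrm{OPT}]<\tfrac{1}{1+\delta}$ per round, and since the $T=\log_{1+\delta}n$ rounds are independent, the probability that \emph{no} round attains cost $\le(1+\delta)\mathrm{OPT}$ is at most $(1+\delta)^{-T}=1/n$. As the algorithm returns the cheapest assignment, its cost is at most $(1+\delta)\mathrm{OPT}$ with probability at least $1-1/n$. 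The runtime is a polynomial number of LP solves, and $T=O(\tfrac{\log n}{\delta})$, giving the claimed dependence on $n$ and $1/\delta$.

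\emph{Fairness.} For a fixed round and fixed $v$, I would study $Y_v=\abs{\set{u\in\Gamma(v):\phi(u)=\phi(v)}}$. Conditioning on $\phi(v)=f_k$, the assignments of the points $u\in\Gamma(v)$ (all distinct from $v$) stay mutually independent, so $Y_v$ is a sum of independent Bernoulli variables with conditional mean $\mu_{v,k}=\sum_{u\in\Gamma(v)}x^\star_{u,f_k}$, which the first LP constraint bounds below by $m_v x^\star_{v,f_k}$. Averaging over the landing facility and using $\sum_{f_k}(x^\star_{v,f_k})^2\ge\tfrac1k\big(\sum_{f_k}x^\star_{v,f_k}\big)^2=\tfrac1k$ (Cauchy--Schwarz together with $\sum_{f_k}x^\star_{v,f_k}=1$) yields $\mathbb{E}[Y_v]\ge m_v/k$. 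For the high-probability statement I would apply a multiplicative Chernoff bound to the conditional sum: whenever the relevant conditional mean is $\Omega(m_v/k)=\Omega(\log n/\epsilon^2)$, the chance that the count drops below $(1-\epsilon)m_v/k$ is at most $\exp(-\epsilon^2 m_v/(2k))=n^{-\Omega(1)}$, where the constant hidden in $m_v=\Omega(k\log n/\epsilon^2)$ is chosen large enough (say to force $n^{-3}$). A union bound over the $n$ vertices and the $T$ rounds makes the fairness bound hold for every vertex in every round with high probability.

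\emph{Combining and the main obstacle.} I would then intersect the two events: with high probability every round is individually fair and at least one round is cheap, so the returned (cheapest) assignment is simultaneously $(1-\epsilon)$-fair and within $(1+\delta)\mathrm{OPT}$, which is the theorem. The delicate step is the conditioning in the fairness argument: $Y_v$ is \emph{not} itself a sum of independent variables, because it depends on the random choice $\phi(v)$, and the per-facility conditional mean $\mu_{v,k}\ge m_v x^\star_{v,f_k}$ is large enough for Chernoff to yield a polynomially small failure probability only when $x^\star_{v,f_k}=\Omega(1/k)$ --- precisely the regime that the hypothesis $m_v=\Omega(k\log n/\epsilon^2)$ is designed to guarantee, since then $m_v x^\star_{v,f_k}=\Omega(\log n/\epsilon^2)$ exactly when $x^\star_{v,f_k}\gtrsim 1/k$. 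The real work, and the point I expect to require the most care, is to split the facilities $v$ may be routed to into ``heavy'' ones (where $x^\star_{v,f_k}$ is at least of order $1/k$, so Chernoff concentrates $Y_v$ around a value $\ge(1-\epsilon)m_v/k$) and ``light'' ones, and to argue that the probability mass $\sum_{f_k}x^\star_{v,f_k}$ sitting on the light facilities, weighted against the corresponding Chernoff factors $\exp(-\Omega(\mu_{v,k}))$, contributes only a negligible share of the overall failure probability --- so that the clean $m_v/k$ bound coming from Cauchy--Schwarz, rather than merely its facility-dependent refinement $m_v x^\star_{v,\phi(v)}$, survives the rounding.
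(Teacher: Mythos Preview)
Your overall architecture is exactly the paper's: solve the LP, round each vertex independently, bound the cost in expectation and amplify via Markov plus $T=\log_{1+\delta}n$ repetitions, and control the fairness deviation via Chernoff and a union bound. The computation $\mathbb{E}[Y_v]\ge m_v/k$ from the first LP constraint together with $\sum_{f_k}(x^\star_{v,f_k})^2\ge 1/k$ is precisely what the paper does. You are in fact \emph{more} careful than the paper on the point you flag as the ``main obstacle'': the paper simply writes ``using Chernoff bound, $\Pr[\,\cdot\,]\le e^{-\epsilon^2 m_v/(2k)}$'' and moves on, never addressing the fact that the indicators $X_u$ (of the event $\phi(u)=\phi(v)$) all depend on the shared random variable $\phi(v)$ and hence are not independent. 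So if you are content to match the paper's level of rigor, you can drop the conditioning discussion entirely and your argument becomes line-for-line the paper's.

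That said, your heavy/light repair does not actually close the gap you correctly spotted. Conditioned on $\phi(v)=f_k$, the mean is only guaranteed to satisfy $\mu_{v,k}\ge m_v x^\star_{v,f_k}$, and for a ``light'' facility with, say, $x^\star_{v,f_k}=n^{-1/4}$ and constant $k$, this gives $\mu_{v,k}\ge m_v n^{-1/4}\ll(1-\epsilon)m_v/k$; the lower-tail event $\{Y_v<(1-\epsilon)m_v/k\}$ then has conditional probability essentially $1$, not $\exp(-\Omega(\mu_{v,k}))$ (which is in any case close to $1$ when $\mu_{v,k}$ is small). That single facility already contributes roughly $n^{-1/4}$ to the failure probability for $v$, and nothing in the hypothesis $m_v=\Omega(k\log n/\epsilon^2)$ rules out such fractional values in the LP optimum. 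More generally, for any threshold you pick, either ``heavy'' facilities still have conditional mean below the target $(1-\epsilon)m_v/k$, or the total light mass $\sum_{\text{light}}x^\star_{v,f_k}$ can be $\Theta(1)$; no choice makes both terms polynomially small. So the obstacle you identified is real --- and it is present, unacknowledged, in the paper's own proof as well --- but the decomposition you sketch does not resolve it.
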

\begin{proof}
Let $x^\star$ be a solution to the linear program~\ref{eq:lp-ifa}. Algorithm \ref{alg:ifa} assigns point $v$ to $f_k$ with probability $x^*_{v,f_k}$. We now prove the quality of this solution. Let $X$ be the random variable denoting the number of points in $\Gamma(v)$ that are assigned to the same facility as a given point $v$. For every $u \in \Gamma(v)$, let $X_u$ be the indicator random variable indicating whether $u$ and $v$ are assigned to the same facility. Thus,

$$E[X_u] = \sum\limits_{f_k\in F} x^*_{v,f_k}x^*_{u,f_k}$$ 

So we have,
\begin{align*}
    E[X] &= \sum\limits_{u\in \Gamma(v)} E[X_u]\\
    &= \sum\limits_{f_k\in F} (x^*_{v,f_k}\sum\limits_{u\in \Gamma(v)} x^*_{u,f_k})\\
    &\ge m_v \sum\limits_{f_k\in F} x^{*^2}_{v,f_k} 
    \ge m_v/k
\end{align*}

Now using Chernoff bound, 
$$\text{Pr}[v \text{ has at most } \frac{m_v}{k}(1-\epsilon)\text{ $\gamma$-similar points}] \le e^{-\frac{\epsilon^2}{2}\frac{m_v}{k}} \le \frac{1}{n^2}$$ 

And using union bound,
$$\text{Pr}[\exists \text{ a vertex that has at most }\frac{m_v}{k}(1-\epsilon)\text{ $\gamma$-similar points}] \le \frac{1}{n}$$ 

Also, clearly, the expected cost of the computed solution is at most $\text{OPT}$. Hence, using Markov's inequality, 
$\text{Pr[cost of computed solution is } \ge (1+\delta)\text{OPT}] \le \frac{1}{1+\delta}$ 

As we repeat the above algorithm $T = \frac{\log n}{\log (1+\delta)}$ times and output the solution with minimum cost, we have
$$\text{Pr[cost of the computed solution is } \ge (1+\delta)\text{OPT}] \le \frac{1}{n}$$ 

Also, by union bound, the probability that there exist a vertex having at most $\frac{m_v}{k}(1-\epsilon)$ $\gamma$-similar points in one of the $T$ solutions is at most $\frac{T}{n}$.
\end{proof}

\begin{algorithm}[!htbp]
 \caption{Algorithm for \IFC\label{alg:ifc}}
  \begin{algorithmic}[1]
    \Require{$(V, (X_v)_{v\in V}, \gamma, (m_v)_{v\in V}, k)$}
    \State{Solve clustering problem $(V, (X_v)_{v\in V}, \gamma, (m_v)_{v\in V}, k)$ using any vanilla algorithm ignoring fairness constraints. Let $\left(\left(C_i\right)_{i\in[\el]},\left(c_i\right)_{i\in[\el]}\right)$ be the output of the clustering algorithm.}
    \State{$F\leftarrow\{c_1,\ldots,c_\el\}$}
    \State{Run algorithm for \IFA on $(V, (X_v)_{v\in V}, \gamma, (m_v)_{v\in V}, F)$. Let $\phi$ be the output.}
    \Return $(\phi^{-1}(c_i))_{i\in[\el]}$ (ignore $\phi^{-1}(c_j)$ if $\phi^{-1}(c_j)$ is the empty set for some $j\in[\el$)
  \end{algorithmic}
\end{algorithm}

 We next show a method to obtain an approximation algorithm for \IFC from an approximation algorithm for \IFA in a black box fashion.


\begin{theorem}\label{thm:final-algo}
If the distances satisfy the triangle inequality, then the existence of an $\rho$-approximation algorithm for \VC and an $\alpha$-approximation algorithm for \IFA with $\lambda$-multiplicative violations for some $\lambda\ge 1$ implies the existence of an $\alpha (\rho+2)$-approximation algorithm for \IFC with $\lambda$-multiplicative violation.
\end{theorem}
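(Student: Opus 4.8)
The plan is to analyze exactly the two-stage procedure of Algorithm~\ref{alg:ifc}: first run the $\rho$-approximation for \VC to obtain a center set $C=\{c_1,\ldots,c_\el\}$, then hand $C$ as the facility set to the $\alpha$-approximation algorithm for \IFA. Since the output is by construction a fair assignment onto $C$ carrying the subroutine's $\lambda$-multiplicative violation, the only thing left to establish is a cost bound. Concretely, I would show that the \emph{cheapest} fair assignment onto $C$ already costs at most $(\rho+2)\,\mathrm{OPT}$, where $\mathrm{OPT}$ is the optimal \IFC cost; composing this with the $\alpha$-approximation for \IFA then yields the claimed $\alpha(\rho+2)$ factor while the violation stays at $\lambda$.

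The core step is an explicit \emph{reassignment} argument. Let $\phi^\star$ be an optimal fair assignment for the \IFC instance, using optimal centers $S^\star$, so that $\cost(\phi^\star)=\mathrm{OPT}$. Define $\sigma$ to send each optimal center $c\in S^\star$ to its nearest center in $C$, and set $\phi'=\sigma\circ\phi^\star$, which is a legal assignment onto $C$. The crucial observation is that $\phi'$ is still fair: any two points that $\phi^\star$ placed in the same cluster are placed by $\phi'$ in the same cluster (they share the image center), so each cluster of $\phi'$ is a \emph{union} of clusters of $\phi^\star$. Merging clusters can only increase the number of $\gamma$-similar points inside a point's cluster, hence constraint~\eqref{eq:p-ifc-constraint} remains satisfied for every $v$. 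This is precisely what lets the reduction pass fairness through as a black box rather than only cost.

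Next I would bound $\cost(\phi')$ via the triangle inequality. For each $v$, let $c_{\mathrm{vc}}(v)$ be the center of $C$ that the vanilla solution assigns $v$ to, so the vanilla cost is $\cost(C)=\sum_v d(v,c_{\mathrm{vc}}(v))^p$. Since $\sigma(\phi^\star(v))$ is the center of $C$ closest to $\phi^\star(v)$, it is at least as close as $c_{\mathrm{vc}}(v)$, giving
\[
 d\big(v,\phi'(v)\big)\le d\big(v,\phi^\star(v)\big)+d\big(\phi^\star(v),\sigma(\phi^\star(v))\big)\le d\big(v,\phi^\star(v)\big)+d\big(\phi^\star(v),c_{\mathrm{vc}}(v)\big),
\]
and one further application yields $d\big(v,\phi'(v)\big)\le 2\,d\big(v,\phi^\star(v)\big)+d\big(v,c_{\mathrm{vc}}(v)\big)$. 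Summing over $v$ (for $p=1$) bounds $\cost(\phi')$ by $2\,\mathrm{OPT}+\cost(C)$. Finally I would invoke $\cost(C)\le\rho\,\mathrm{OPT}_{\mathrm{vc}}\le\rho\,\mathrm{OPT}$, the last inequality holding because the optimal fair clustering is, in particular, a feasible vanilla clustering. This gives $\cost(\phi')\le(\rho+2)\,\mathrm{OPT}$, so the $\alpha$-approximation for \IFA on facility set $C$ returns an assignment of cost at most $\alpha(\rho+2)\,\mathrm{OPT}$.

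The main obstacle I anticipate is the interaction between the power $p$ and the triangle inequality: the clean ``$2\,d+d$'' collapse above produces the additive ``$+2$'' only for the sum-of-distances cost ($p=1$). For general $p$ one must expand $\big(2\,d(v,\phi^\star(v))+d(v,c_{\mathrm{vc}}(v))\big)^p$, which requires a Minkowski/power-mean-type inequality; obtaining a flat $\rho+2$ rather than a $p$-dependent constant therefore relies on reading the objective in the $\ell_p$-norm (root-normalized) sense, where the triangle inequality for $\|\cdot\|_p$ applies termwise to the distance vectors. A secondary, easier point is the bookkeeping of the $\lambda$-multiplicative violation: because the reassignment $\phi'$ is \emph{exactly} fair, no violation is introduced in the cost analysis, and the only slack comes from the \IFA subroutine, so the final guarantee inherits the same $\lambda$ unchanged.
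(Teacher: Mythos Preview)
Your proposal is correct and follows essentially the same approach as the paper: define $\phi'$ by composing $\phi^\star$ with the nearest-center map $\sigma$ into the vanilla center set, argue that $\phi'$ remains fair because its clusters are unions of $\phi^\star$-clusters, and bound $d(v,\phi'(v))\le 2\,d(v,\phi^\star(v))+d(v,c_{\mathrm{vc}}(v))$ via the triangle inequality before invoking the $\alpha$-approximate \IFA subroutine. Your treatment is in fact slightly more explicit than the paper's on two points the paper leaves terse---the step $\cost(C)\le\rho\,\mathrm{OPT}_{\mathrm{vc}}\le\rho\,\mathrm{OPT}$ and the $\ell_p$-norm reading needed for general $p$---and your bookkeeping of the $\lambda$-violation is correct.
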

\begin{proof}
Let $S^*$ be the optimal set of facilities opened and $\phi^*$ be the optimal assignment in the input \IFC instance. Let $S$ be the set of facilities returned by the vanilla $k$-clustering problem and $\phi$ the assignment. For each $f^* \in S^*$, let us define $\text{nrst}(f^*)=\arg\min_{f\in S} d(f,f^*)$. Consider the assignment $\phi'$ over the set of facilities $S$ that assigns each vertex $v$ to $\text{nrst}(\phi^*(v))$. We claim that $\phi'$ is a fair assignment (please see Claim \ref{claim:phi_fair}).


For any vertex $v$, let $\phi(v)=f$, $\phi'(v)=f'$ and $\phi^*(v)=f^*$. Thus $d(v,f')\le d(v,f^*)+d(f^*,\text{nrst}(f^*)) \le d(v,f^*) + d(f^*,f) \le 2d(v,f^*) +d(v,f)$. Since $l_p$ is a monotone norm, $l_p(S,\phi')\le 2l_p(S^*,\phi^*)+l_p(S,\phi)\le (\rho+2)OPT$. Now, since we have an $\alpha$-approximation algorithm for \IFA, the solution returned by the algorithm will have cost at most $\alpha \cdot l_p(S,\phi') \le \alpha (\rho+2)\text{OPT}$.
\end{proof}
Note that the proof requires that $\phi'$ is an individually fair assignment. Next, we prove the following claim:

\begin{claim}
\label{claim:phi_fair}
$\phi'$ is an individually fair assignment.
\end{claim}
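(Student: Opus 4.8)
The plan is to show that $\phi'$ preserves, for every point $v$, at least as many $\gamma$-similar co-assigned neighbours as the optimal fair assignment $\phi^*$ does. Since $\phi^*$ is feasible for the input \IFC instance, it already satisfies the fairness constraint~\eqref{eq:p-ifc-constraint}, so the whole claim reduces to transferring that guarantee from $\phi^*$ to $\phi'$.

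First I would record the defining property of $\phi'$: every point $v$ is sent to $\text{nrst}(\phi^*(v))$, where $\text{nrst}\colon S^*\to S$ is a fixed function (with ties broken in some fixed way so that it is single-valued). The crucial structural observation is that $\phi'$ factors through $\phi^*$, namely $\phi'=\text{nrst}\circ\phi^*$. Consequently, if two points land in the same cluster under $\phi^*$, they also land in the same cluster under $\phi'$; that is, $\phi^*(u)=\phi^*(v)$ implies $\phi'(u)=\phi'(v)$. Intuitively, $\phi'$ only coarsens the partition induced by $\phi^*$ by possibly merging several optimal clusters that share a common nearest open facility in $S$.

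The main step is then a set-inclusion argument. Fixing an arbitrary $v\in V$, I would establish
\[
\set{u : u\in\Gamma(v) \text{ and } \phi^*(u)=\phi^*(v)} \subseteq \set{u : u\in\Gamma(v) \text{ and } \phi'(u)=\phi'(v)},
\]
which is immediate from the implication above: any $u$ in the left set is $\gamma$-similar to $v$ and has $\phi^*(u)=\phi^*(v)$, hence $\phi'(u)=\phi'(v)$, placing it in the right set. Taking cardinalities and invoking that $\phi^*$ satisfies~\eqref{eq:p-ifc-constraint} for $v$ yields
\[
\abs{\set{u : u\in\Gamma(v) \text{ and } \phi'(u)=\phi'(v)}} \ge \abs{\set{u : u\in\Gamma(v) \text{ and } \phi^*(u)=\phi^*(v)}} \ge m_v.
\]
As $v$ is arbitrary, $\phi'$ meets the fairness constraint at every point and is therefore individually fair.

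There is essentially no hard obstacle here; the argument is a one-line monotonicity observation once the factorisation $\phi'=\text{nrst}\circ\phi^*$ is made explicit. The only point requiring care is ensuring that $\text{nrst}$ is genuinely a function (via consistent tie-breaking), so that distinct points sharing a $\phi^*$-image are guaranteed a common $\phi'$-image; without this, the implication $\phi^*(u)=\phi^*(v)\Rightarrow\phi'(u)=\phi'(v)$ could fail. It is also worth emphasising that merging clusters never decreases the co-assigned similar count, which is precisely why coarsening the $\phi^*$-partition is harmless for fairness — while the cost blow-up from such merging is controlled separately through the triangle-inequality bound in \Cref{thm:final-algo}.
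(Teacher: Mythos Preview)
Your proposal is correct and matches the paper's own proof almost exactly: both hinge on the observation that $\phi' = \text{nrst}\circ\phi^*$, so each optimal cluster is mapped wholesale to a single facility in $S$, and hence every point keeps at least the $m_v$ $\gamma$-similar neighbours it had under $\phi^*$. Your added remark about fixing tie-breaking so that $\text{nrst}$ is a genuine function is a worthwhile clarification that the paper leaves implicit.
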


\begin{proof}
For $v\in V$, let $T_v$ denote the set of vertices assigned to $\phi^*(v)$. Since $\phi^*$ is an individually fair assignment, the number of $\gamma$-similar points of $v$ in $T_v$ is at least $m_v$. Now all vertices in $T_v$ are assigned to the facility $\text{nrst}(\phi^*(v))$. Thus, the number of $\gamma$-similar points of $v$ in the assignment $\phi'$ is at least $m_v$. Hence, $\phi'$ is an individually fair assignment. 
\end{proof}

\section{Experiments}
\label{sec:exp}

\begin{table*}[ht]
\centering
\setlength{\tabcolsep}{4pt}
\renewcommand{\arraystretch}{1}
\resizebox{\columnwidth}{!}{
\begin{tabular}{cccccccccc}
\toprule
& \multicolumn{3}{c}{\textbf{Normalized Cost}} & \multicolumn{3}{c}{\textbf{Fairness}} & \multicolumn{3}{c}{\textbf{Macro Fairness}} \\
\midrule
& \textsc{Adult}& \textsc{Bank} & \textsc{Diabetes} &
\textsc{Adult}& \textsc{Bank} & \textsc{Diabetes} &
\textsc{Adult}& \textsc{Bank} & \textsc{Diabetes} \\
\midrule
FairCenter & $0.544 \pm 0.107$ & $0.528 \pm 0.134$ & $0.341 \pm 0.029$ & $90.0 \pm 5.5$ & $96.0 \pm 2.0$ & \underline{$94.1 \pm 5.0$} & $64.0 \pm 15.0$ & \underline{$76.0 \pm 8.0$} & $64.0 \pm 8.0$ \\
Alg-PP & $0.625 \pm 0.098$ & $0.516 \pm 0.053$ & $0.422 \pm 0.089$ & $86.8 \pm 10.9$ & $91.3 \pm 7.4$ & $88.1 \pm 7.0$ & \underline{$70.0 \pm 24.5$} & $72.9 \pm 16.3$ & $60.0 \pm 20.0$ \\
Alg-AG & $0.617 \pm 0.114$ & $0.563 \pm 0.178$ & $0.649 \pm 0.268$ & $86.8 \pm 10.9$ & $83.4 \pm 5.5$ & $87.2 \pm 7.9$ & \underline{$70.0 \pm 24.5$} & $56.9 \pm 13.2$ & $60.0 \pm 20.0$ \\
P-PoF-Alg & $0.592 \pm 0.084$ & $0.586 \pm 0.136$ & $0.528 \pm 0.150$ & $86.8 \pm 10.9$ & $87.6 \pm 7.7$ & $92.8 \pm 9.1$ & \underline{$70.0 \pm 24.5$} & $66.9 \pm 18.0$ & \underline{$80.0 \pm 24.5$} \\
H-S & \underline{$0.267 \pm 0.053$} & \underline{$0.251 \pm 0.022$} & $0.107 \pm 0.033$ & \underline{$90.8 \pm 4.3$} & $\textbf{97.7} \pm \textbf{1.8}$ & $91.1 \pm 7.1$ & $53.0 \pm 13.3$ & $70.0 \pm 8.4$ & $66.0 \pm 21.5$ \\
Gonzalez & $0.331 \pm 0.084$ & $0.327 \pm 0.022$ & \underline{$0.088 \pm 0.029$} & $88.6 \pm 3.5$ & $90.4 \pm 3.9$ & $89.9 \pm 4.0$ & $56.0 \pm 15.0$ & $60.0 \pm 0.0$ & $56.0 \pm 15.0$ \\
LP-FAIR & $\textbf{0.194} \pm \textbf{0.034}$ & $\textbf{0.176} \pm \textbf{0.016}$ & $\textbf{0.057} \pm \textbf{0.026}$ & $\textbf{92.3} \pm \textbf{0.7}$ & \underline{$96.3 \pm 4.6$} & $\textbf{97.9} \pm \textbf{2.8}$ & $\textbf{80.0} \pm \textbf{0.0}$ & $\textbf{92.0} \pm \textbf{9.8}$ & $\textbf{92.0} \pm \textbf{9.8}$ \\
\bottomrule
\end{tabular}
}
\caption{Normalized cost and fairness comparison between LP-FAIR (ours) and competing baselines. The best and second-best values for each column are in bold and underlined, respectively. Our method outperforms or has performance comparable to the baselines in terms of the three evaluation metrics. LP-FAIR is able to increase the clustering fairness while keeping the distance between individuals smaller. For Normalized Cost, LP-FAIR is 34.5\% better than the best cost baseline, H-S, on average. For fairness, the best baselines are FairCenter and P-PoF-Alg, which are outperformed by our method by 12.5\% on average.}
 \label{tab::cost_fairness_methods}


 \end{table*}

In this section, we provide an experimental evaluation of our proposed LP-based algorithm along with six baselines using the cost function from K-means (unless specified otherwise) on three different datasets. Our evaluation consists of performance on different metrics (Sec. \ref{sec::performance}), the cluster quality (Sec. \ref{sec::quality_of_clusters}), and the effect of varying the number of clusters (Sec. \ref{sec::varying_number_clusters}). We also provide the running time (Supp. \ref{sec::running_time}) and additional experiments (Supp. \ref{sec::additional_exp}) in the Supplementary. 
Our implementation\footnote{\href{https://anonymous.4open.science/r/lp-fair}{https://anonymous.4open.science/r/lp-fair}} is available online anonymously.

\paragraph{Datasets:} We use three datasets from the UCI repository in our experiments.\footnote{\url{https://archive.ics.uci.edu/ml/datasets}} These datasets have also been used by previous work \cite{bera2019fair,MV20,CKLV17}.
We consider the following attributes for distance and $\gamma$-similarity (fairness):
\begin{itemize}
\item \textbf{\textsc{Adult} \cite{kohavi1996scaling}:} Cluster labels determine whether a person makes over 50K a year. Distance features are ``educationnum" and ``age". The $\gamma$-similarity features are ``salary" and ``hoursperweek".
\item \textbf{\textsc{Bank} \cite{moro2014data}: } Data from customers of a bank. Distance features are ``duration", and ``age", $\gamma$-similarity features are ``education" and ``balance".
    \item \textbf{\textsc{Diabetes}: } Data from diabetes patients from 130 hospitals in the USA from 1999 to 2008. The distance features are ``age" and ``number-emergency". The $\gamma$-similarity features are ``time\_in\_hospital", ``num\_lab\_procedures".
\end{itemize}
We also provide experiments with randomly selected features in the Supplementary.

\paragraph{Algorithms:} We evaluate the following seven algorithms. 
\begin{itemize}
    \item \textbf{Our LP-based approach (LP-FAIR): } Provides probabilistic approximation guarantees (Algorithm~\ref{alg:ifa}). Our implementation is based on Algorithm \ref{alg:ifc} (Section \ref{sec::algo_results}).
    \item \textbf{FairCenter \cite{JKL19}}: Ensures fairness based on the existence of a cluster center nearby. Notice that this algorithm has different fairness criteria than ours.
    \item \textbf{Alg-PP \cite{chakrabarti2022new}}: Optimizes the clusters based on a per-point fairness metric.
    \item \textbf{Alg-AG \cite{chakrabarti2022new}}: Optimizes the clusters based on an aggregate fairness metric.
    \item \textbf{P-PoF-Alg \cite{chakrabarti2022new}}: Incorporates the Price of Fairness notion and combines the constraints from Alg-PP and Alg-AG.
    \item \textbf{Hochbaum-Shmoys (H-S) \cite{hochbaum1985best}}: Uses the triangle inequality to solve a k-center problem with a 2-approximation algorithm. 
    \item \textbf{Gonzalez \cite{gonzalez1985clustering}}: Minimizes the maximum intercluster distance. H-S and Gonzalez are used as baselines in \cite{chakrabarti2022new} for comparison.
\end{itemize}

\paragraph{Performance measures:} We evaluate the algorithms described above using the following metrics:

\begin{itemize}
    \item \textbf{Normalized Cost:} Clustering cost (Equation \ref{eq:p-cost}) normalized by the cost of trivially fair clustering (Definition \ref{defn:tfc_trivial}). The normalization removes the effect of dataset-dependent feature distributions and makes it easier to compare the results across datasets: $\text{Normalized Cost}(A)= \frac{ Cost(A)}{Cost(\text{\TFC})}$.
    \item \textbf{Fairness}: This denotes the fraction of points that satisfy individual fairness. 
    \item \textbf{Macro Fairness}: This denotes the average of the Fairness metric for each cluster. 
    \item \textbf{Cluster Imbalance}: This measures the imbalance of the found clusters in terms of their sizes. It is a standard deviation of cluster sizes (i.e., the number of elements in the cluster). The lower value of imbalance means the clusters are more balanced.
\end{itemize}

\paragraph{Other settings:} We choose 200 points from each dataset randomly for all the experiments. The experiments are run five times, and averages and standard deviations are reported based on these repetitions. Let $\Gamma_v$ denote the number of $\gamma$-matched points of $v$ in the entire dataset and $k'$ be the number of clusters found by the algorithm. The initial $m_v$ for each node $v$ is set as $\frac{\theta}{k}*|\Gamma_v|$, then scaled by $\frac{k}{k'}$ (after $k'$ is decided) to make the fairness metric cluster balanced. Unless specified otherwise, we set $\theta$ as $0.5$ in all experiments.

\begin{figure*}[ht]
\vspace{-1mm}
\centering
\includegraphics[keepaspectratio, width=0.91\textwidth]{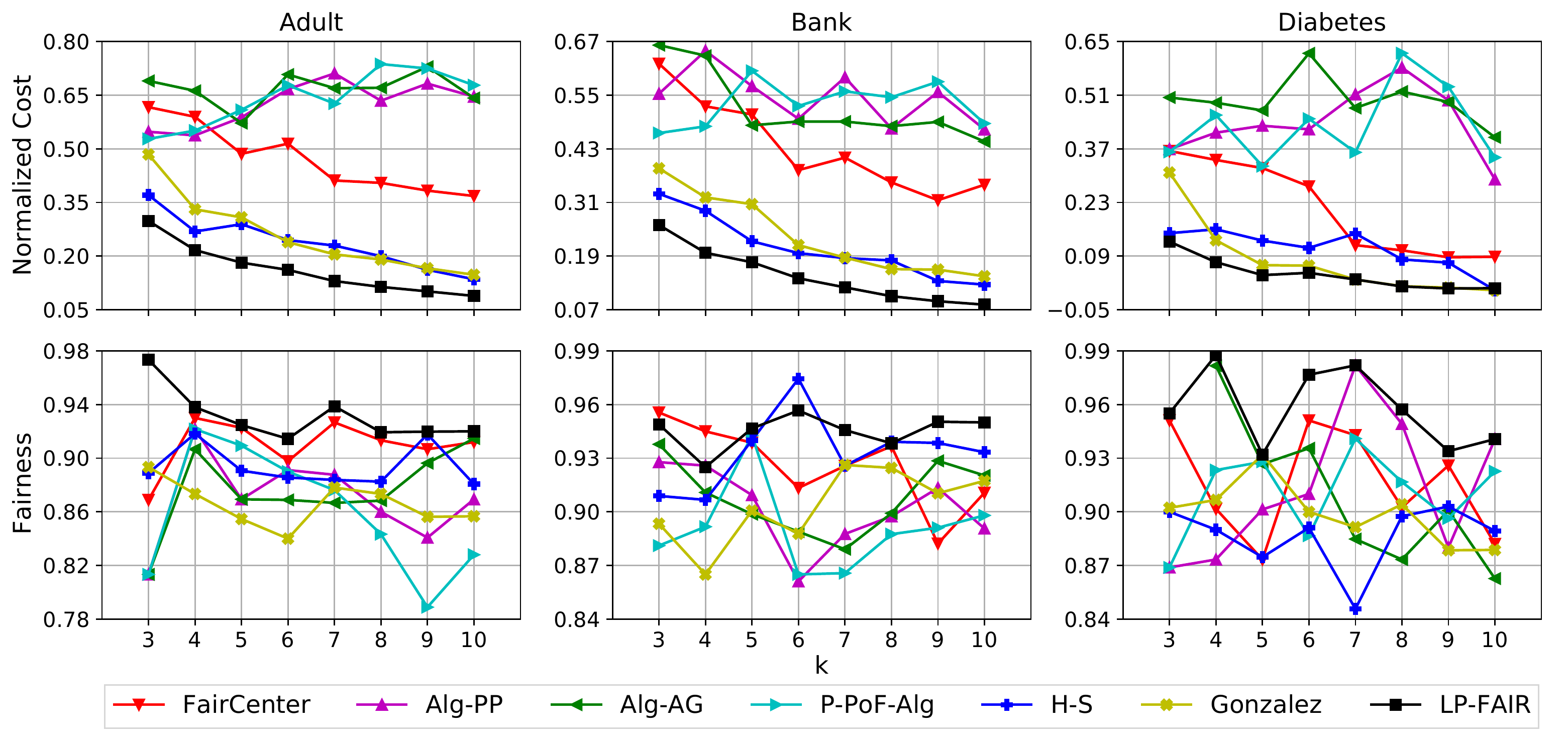}
\vspace{-1mm}
\caption{Cost and fairness results, varying the number of clusters ($k$) for $30k$ random data points using all datasets and methods (better seen in color). LP-FAIR outperforms or achieves results comparable to the baselines for all $k$. While increasing $k$ decreases the cost, it is less correlated with the fairness metric.}
\label{fig::different_k}
\vspace{-2mm}
\end{figure*}

\subsection{Performance \label{sec::performance}}

We present the results for our method LP-FAIR and competing baselines using the normalized cost (the lower the better) and fairness metrics (the higher the better). Table \ref{tab::cost_fairness_methods} shows the results produced by all algorithms. LP-FAIR has a significantly lower cost than the baselines, with a 34.5\% lower cost than the best baseline (H-S) on average. Moreover, LP-FAIR consistently clusters points fairer. The best baselines for the fairness metric are FairCenter and P-PoF-Alg, which have consistent performances overall. However, LP-FAIR outperforms them by 12.5\% on average. Furthermore, our method generates results with less variance compared to the baselines, which shows the stability of our algorithm.

\begin{table}[ht]
\centering
\begin{tabular}{cccc}
\toprule
& \textsc{Adult}& \textsc{Bank} & \textsc{Diabetes} \\
\midrule
FairCenter & $5.0 \pm 0.0$ & $5.0 \pm 0.0 $ & $4.0 \pm 0.63$ \\
Alg-PP & $2.0 \pm 0.0$ & $3.2 \pm 0.98$ & $2.0 \pm 0.0$ \\
Alg-AG & $2.0 \pm 0.0$ & $2.8 \pm 0.4$ & $1.8 \pm 0.4$ \\
P-PoF-Alg & $2.0 \pm 0.0$ & $4.0 \pm 1.67$ & $2.0 \pm 0.0$ \\
H-S & $4.6 \pm 0.49$ & $4.6 \pm 0.49$ & $4.6 \pm 0.49$ \\
Gonzalez & $5.0 \pm 0.0$ & $5.0 \pm 0.0$ & $5.0 \pm 0.0$ \\
LP-FAIR & $5.0 \pm 0.0$ & $5.0 \pm 0.0$ & $4.8 \pm 0.4$ \\
\bottomrule
\end{tabular}
\caption{The mean and standard deviation of the number of clusters generated by the methods (with $k=5$). Generating fewer clusters generally leads to higher costs (see Table \ref{tab::cost_fairness_methods}).}
 \label{tab::cluster_counts}
 \end{table}
 
\begin{table}[ht]
\centering
\begin{tabular}{cccc}
\toprule
& \textsc{Adult}& \textsc{Bank} & \textsc{Diabetes} \\
\midrule
FairCenter & $28.4 \pm 5.8$ & $30.3 \pm 5.2$ & $35.1 \pm 6.3$ \\
Alg-PP & $51.4 \pm 6.5$ & $37.1 \pm 13.6$ & $63.8 \pm 14.8$ \\
Alg-AG & $51.4 \pm 6.5$ & $42.7 \pm 13.8$ & $54.4 \pm 28.7$ \\
P-PoF-Alg & $51.4 \pm 6.5$ & $29.1 \pm 15.5$ & $37.6 \pm 23.0$ \\
H-S & $44.5 \pm 9.5$ & $34.9 \pm 6.2$ & $40.9 \pm 13.4$ \\
Gonzalez & $40.9 \pm 10.0$ & $33.3 \pm 5.7$ & $37.1 \pm 10.0$ \\
LP-FAIR & $17.9 \pm 2.8$ & $17.6 \pm 2.1$ & $15.8 \pm 4.8$ \\
\bottomrule
\end{tabular}

\caption{The mean and standard deviation of cluster imbalance. Imbalanced clusters result in small clusters where the members might not be individually fair. LP-FAIR generates clusters with lower imbalance compared to the baselines.}
 \label{tab::cluster_balance}
 \end{table}

\subsection{Quality of Clusters \label{sec::quality_of_clusters}}

Cluster quality is also a crucial metric to be considered. Table \ref{tab::cluster_counts} and Table \ref{tab::cluster_balance} show generated cluster counts and imbalance, respectively, for all methods and datasets. Some algorithms use less number of clusters even though the expected number is $k=5$, which makes their cost higher compared to the one generating close to $5$. Cluster imbalance is also critical as imbalanced clusters would result in clusters with a small number of elements in them. The elements in those clusters are unlikely to be fair, and this will affect individual fairness as well as macro fairness. LP-FAIR generates more balanced clusters (low imbalance) compared to the baselines resulting in better fairness.

\subsection{Varying Number of Clusters \label{sec::varying_number_clusters}}

Here, we evaluate the impact of the number of clusters ($k$) on cost and fairness. We vary the value of $k$ from 3 to 10 and choose $30k$ random points from each dataset. Figure \ref{fig::different_k} shows the results. LP-FAIR achieves the best results for most values of $k$. In general, the cost decreases as the number of clusters increases. The exceptions are for the cases where the number of clusters generated is lower than expected. For fairness metrics, LP-FAIR outperforms or achieves comparable results to all baselines for all datasets. The values for Fairness do not present a clear trend based on $k$. We provide results on Macro Fairness in Supplementary \ref{sec::macro_varying_cluster_count}.


 \paragraph{Summary.} A few key observations from the above experiments are as follows: (1) Our method (LP-FAIR) produces clusters that are individually fair to more than 95.5\% points and achieves 88\% Macro Fairness on average, outperforming the baselines in most of the settings; (2) LP-FAIR also produces lower cost or distance than competing baselines in all settings;
(3) LP-FAIR generates better clusters in terms of numbers and imbalance. This makes LP-FAIR clusters less costly and more fair compared to the existing competitors.

\section{Conclusion}
We have studied the $k$-clustering problem with individual fairness constraints. Our notion of fairness is defined in terms of a feature-based similarity among points and guarantees that each point will have a pre-defined number of similar points in their cluster. We have provided an algorithm with probabilistic approximation guarantees for optimizing the cluster distance as well as ensuring fairness. Finally, the experimental results have shown that our proposed algorithm can produce $34.5\%$ lower clustering cost and $12.5\%$ higher individual fairness than previous works on average. 


\printbibliography

\appendix
\section{Proof of Theorem \ref{thm:ifc_np_hard}}

\begin{proof}
The problem clearly belongs to \NP. We now exhibit a reduction from \SPR. Let $(\GG=(\VV=\{v_1,v_2,\ldots,v_n\},\EE),(\lambda_v)_{v\in\VV})$ be an arbitrary instance of \SPR. In our \IFC instance, we have two facilities $l$ and $r$ and the set of points  $U=\{u_1,u_2,\ldots,u_n\}$. We define the distances as
\[
  d(u_i,l) =
  \begin{cases}
                                   (\lceil \frac{n}{2} \rceil + \beta)^{1/p} & \text{$1 \le i \le \lceil \frac{n}{2} \rceil + 1$} \\
                                   \beta^{1/p} & \text{$\lceil \frac{n}{2} \rceil + 2 \le i \le n$} 
  \end{cases}
\] 
\[
  d(u_i,r) =
  \begin{cases}
                                    \beta^{1/p} & \text{$1 \le i \le \lfloor \frac{n}{2} \rfloor$} \\
                                   (\lceil \frac{n}{2} \rceil + \beta + 1)^{1/p} & \text{$\lfloor \frac{n}{2} \rfloor + 1 \le i \le n$} 
  \end{cases}
\]
for any $\beta \ge \frac{\lceil \frac{n}{2}\rceil + 1}{2}$. We can easily verify that the following properties are satisfied:
\begin{enumerate}[i)]
    \item The distances satisfy the triangle inequality.\\
    \item $\sum\limits_{v\in V} d(v,l)^p = \sum\limits_{v\in V} d(v,r)^p = A$ (say).\\
    \item For any $ X \subset V, X\ne \emptyset, X\ne V,$ we have $\sum\limits_{v\in X} d(v,l)^p \neq \sum\limits_{v\in X} d(v,r)^p$. To see this, let $s = |X\cap \{u_1,u_2,\ldots,u_{\lceil \frac{n}{2} \rceil + 1}\}|$ and $t = |X\cap \{u_{\lfloor \frac{n}{2} \rfloor + 1},u_{\lfloor \frac{n}{2} \rfloor + 2},\ldots,u_n\}|$ so that $s \le \lceil \frac{n}{2} \rceil +1$ and $t \le \lceil \frac{n}{2} \rceil$. Then $\sum\limits_{v\in X} d(v,l)^p = s\lceil \frac{n}{2} \rceil + \beta |X|$ and $\sum\limits_{v\in X} d(v,r)^p = t(\lceil \frac{n}{2} \rceil+ 1) + \beta |X|$. Thus $\sum\limits_{v\in X} d(v,l)^p = \sum\limits_{v\in X} d(v,r)^p$ would imply $s\lceil \frac{n}{2} \rceil = t(\lceil \frac{n}{2}\rceil + 1)$ and therefore either $s=t=0$ or $s = \lceil \frac{n}{2} \rceil +1$ and $t = \lceil \frac{n}{2} \rceil$. In the former case $X=\emptyset$ while in the latter case $X=V$, a contradiction.\\
\end{enumerate}

We now describe the feature vector. For every edge $e\in \EE$, we have a feature $\theta_e$. A point $u_i, i\in[n]$ has value $1$ for $\theta_e$ if the edge $e$ is incident on the vertex $v_i$ in \GG; otherwise has value $0$ for $\theta_e$. We define the distance function $d^\prime$ on feature space as the number of features that two points differ. Finally, we set the similarity parameter $\gamma=\frac{e^{-m}+e^{-(m-1)}}{2}$. We observe that two points $u_i, u_j, i,j\in[n]$ are $\gamma$-similar if and only if there is an edge between $v_i$ and $v_j$ in \GG. Finally, we define $m_{v_i}=\lambda_{v_i}$ for every $i\in[n]$.

We claim that the \SPR instance is a yes instance if and only if there exists a fair clustering of $U$ with cost $< A$. The ``if" part follows directly, since any fair clustering of $U$ with cost $<A$ must be non-trivial and fairness ensures that the corresponding partition of \GG is satisfactory.

For the ``only if" part, let $(X,\Bar{X})$ be a non-trivial satisfactory partition of $\GG$. Let $\phi_1$ denote the assignment that assigns all corresponding vertices in $X$ to $l$ and all corresponding vertices in $\Bar{X}$ to $r$, and $\phi_2$ denote the assignment that assigns all vertices in $X$ to $r$ and all vertices in $\Bar{X}$ to $l$. Thus, \[ \cost(\phi_1) = \sum\limits_{v\in X} d(v,l)^p + \sum\limits_{v\in \Bar{X}} d(v,r)^p \] 
\[ \cost(\phi_2) = \sum\limits_{v\in X} d(v,r)^p + \sum\limits_{v\in \Bar{X}} d(v,l)^p \]
Thus, $\cost(\phi_1) + \cost(\phi_2) = 2A$. Now, it cannot be the case that $\cost(\phi_1)=\cost(\phi_2) = A$, which would  imply that one of the assignments $\phi_1$ or $\phi_2$ must have cost $<A$. Suppose to the contrary that $\cost(\phi_1)=A$. Thus, $\sum\limits_{v\in X} d(v,l)^p + \sum\limits_{v\in \Bar{X}} d(v,r)^p = \sum\limits_{v\in X} d(v,r)^p + \sum\limits_{v\in \Bar{X}} d(v,r)^p$ and therefore $\sum\limits_{v\in X} d(v,l)^p = \sum\limits_{v\in X} d(v,r)^p$, a contradiction.
\end{proof}

\section{Proof of Theorem \ref{thm:fptas_triangle}}
\begin{proof}
Let $\AA$ be an FPTAS for IFC. Similar to the proof of \cref{thm:noapprox}, we create $n-1$ instances of \SPR where instance $I_i$ is as follows: the set of points is $U=\{u_1,\ldots,u_n\}$ and $m_{v_i}=\lambda_{v_i}$ for every $i\in[n]$ for all the instances; for instance $I_i, i\in[n]$, we introduce $2$ facilities $l$ and $r$ and define distances as follows:
\[
  d(u_j,l) =
  \begin{cases}
                                    (1+\beta)^{1/p} & \text{$j=1$} \\
                                   \beta^{1/p} & \text{$j\in \{2,3,\ldots,n\}$} 
  \end{cases}
\]
\[
  d(u_j,r) =
  \begin{cases}
                                    (1+\beta)^{1/p} & \text{$j=i+1$} \\
                                   \beta^{1/p} & \text{$j\in \{1,2,\ldots,n\} \setminus \{i+1\}$} 
  \end{cases}
\]
where $\beta$ is any constant $\ge 1/2$. The algorithm runs $\BB$ on each of the above instances with approximation parameter $\epsilon = \frac{1}{2n\beta}$. If $\BB$ returns a solution of cost less than $1+n\beta$ on any instance, return yes for the \SPR instance; otherwise, we return no for the \SPR instance.

Clearly, the cost of the trivial partition is $1+n\beta$. Thus, if $\GG$ does not have a non-trivial satisfactory partition, then $\BB$ must always return the trivial assignment of cost $1+n\beta$ for all instances. If $\GG$ has a satisfactory partition $(X,\Bar{X})$, then as in \cref{thm:noapprox}, there exists an instance with optimal cost $n\beta$. Thus, the solution returned by $\BB$ will have cost at most $n\beta(1+\frac{1}{2n\beta}) < 1 + n\beta$. Hence, the algorithm is correct.
\end{proof}

\section{Running Time \label{sec::running_time}}

The most expensive part of our algorithm is the linear programming part where we use scipy.optimize.linprog Python module. Since our algorithm is randomized, we run 10 trials, but the best performance is achieved in at most 4 trials. One trial of our algorithm takes 36.7 seconds to finish on the Adult dataset. Table \ref{tab::running_times} shows the running time of LP-FAIR and the baselines on the Adult dataset.

\begin{table}[ht]
\footnotesize
\centering
\begin{tabular}{cc}
\toprule
& \textsc{Adult} \\
\midrule
FairCenter & $16.5$s \\
Alg-PP & $11.8$s \\
Alg-AG & $9.68$s \\
P-PoF-Alg & $9.55$s \\
H-S & $7.92$s \\
Gonzalez & $7.91$s \\
LP-FAIR (10 trials) & $367.5$s \\
LP-FAIR (4 trials) & $147$s \\
LP-FAIR (1 trial) & $36.7$s \\
\bottomrule
\end{tabular}

\caption{The running times of LP-FAIR and baselines on the Adult dataset. We run our randomized algorithm 10 times, but the best performance is achieved with at most 4 trials.}
 \label{tab::running_times}
 \end{table}

\begin{figure*}
\centering
\includegraphics[keepaspectratio, width=0.91\textwidth]{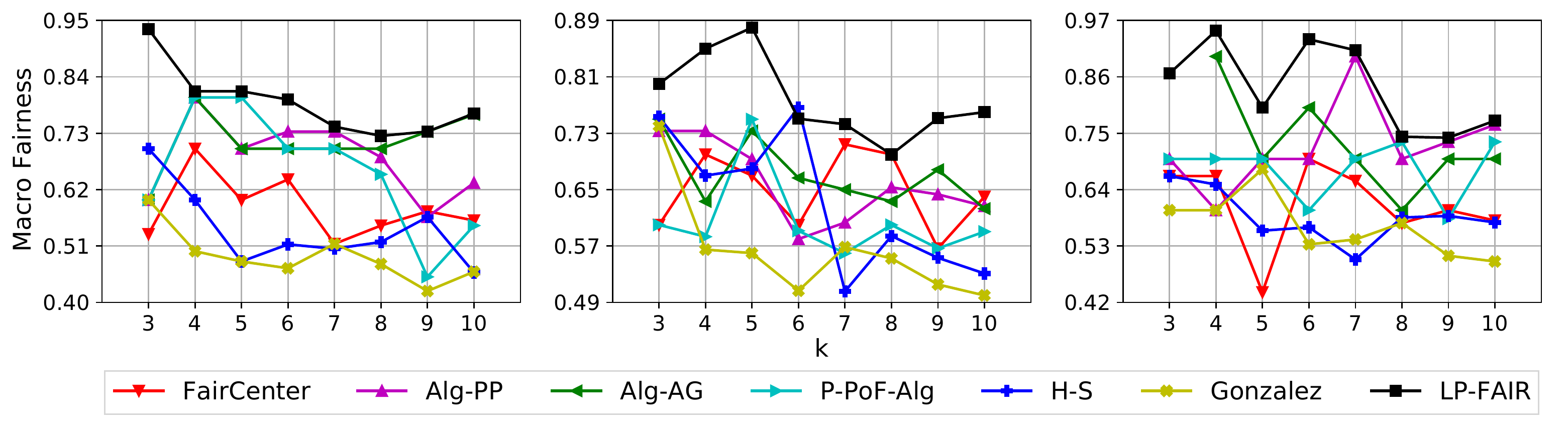}
\caption{Macro Fairness results, varying the number of clusters ($k$) for $30k$ random data points using all datasets and methods (better seen in color). LP-FAIR outperforms or achieves results comparable to the baselines for all $k$. Increasing $k$ decreases the macro fairness.}
\label{fig::different_k_macro_only}
\end{figure*}

\section{Additional Experiments}
\label{sec::additional_exp}

\subsection{Random Features \label{sec::random_features}}

We randomly select two distance and two fairness attributes, with five different random selections, and run each experiment five times. Table \ref{tab::cost_fairness_methods_random} shows that selecting random features does not alter the performance of our method compared to Table \ref{tab::cost_fairness_methods}, and LP-FAIR is the best performing or has comparable results to the baselines.

\begin{table}[ht]
\centering
\setlength{\tabcolsep}{4pt}
\resizebox{0.7\columnwidth}{!}{
\begin{tabular}{cccccccccc}
\toprule
& \multicolumn{3}{c}{\textbf{Normalized Cost}} & \multicolumn{3}{c}{\textbf{Fairness}} \\
\midrule
& \textsc{Adult}& \textsc{Bank} & \textsc{Diabetes} &
\textsc{Adult}& \textsc{Bank} & \textsc{Diabetes} \\
\midrule
FairCenter & $0.783$ & $0.535$ & $0.874$ & $91.93$ & $90.90$ & $92.43$ \\
Alg-PP & $0.632$ & $0.415$ & $0.852$ & $76.60$ & $77.38$ & $91.00$ \\
Alg-AG & $0.643$ & $0.432$ & $0.953$ & $78.55$ & $77.75$ & $91.30$ \\
P-PoF-Alg & $0.525$ & $0.382$ & $0.856$ & $76.60$ & $75.15$ & $94.20$ \\
H-S & \underline{$0.415$} & $0.526$ & $0.773$ & \underline{$92.55$} & $\textbf{93.60}$ & \underline{$94.90$} \\
Gonzalez & $0.472$ & \underline{$0.359$} & $0.548$ & $88.50$ & $91.20$ & $90.50$ \\
LP-FAIR & {\bf 0.302} & $\textbf{0.232}$ & $\textbf{0.189}$ & $\textbf{93.48}$ & \underline{$92.18$} & $\textbf{96.43}$ \\
\bottomrule
\end{tabular}
}
\caption{Normalized cost and fairness comparison between LP-FAIR (ours) and competing baselines with random feature selections. The best and second-best values for each column are in bold and underlined, respectively. Our method outperforms or has performance comparable to the baselines in terms of fairness and cost.}
 \label{tab::cost_fairness_methods_random}
 \end{table}

\subsection{Neglecting Fairness Constraint \label{sec::neglect_fairness_constraint}}

We run experiments by setting $m_v = 0$ in Equation \ref{eq:p-ifc-constraint} to see the effect of our fairness constraint. Setting $m_v = 0$ essentially makes all points fair after the first cluster assignments, so the cost will be minimized with K-means. Table \ref{tab::fairness_effect} shows that setting $m_v = 0$, as expected, decreases the normalized cost, whereas the fairness performance becomes much worse.

\begin{table}[ht]
\centering
\setlength{\tabcolsep}{4pt}
\resizebox{0.7\columnwidth}{!}{
\begin{tabular}{cccccccccc}
\toprule
& \multicolumn{3}{c}{\textbf{Normalized Cost}} & \multicolumn{3}{c}{\textbf{Fairness}} \\
\midrule
& \textsc{Adult}& \textsc{Bank} & \textsc{Diabetes} &
\textsc{Adult}& \textsc{Bank} & \textsc{Diabetes} \\
\midrule
LP-FAIR ($m_v = 0$) & $\textbf{0.182}$ & $\textbf{0.159}$ & $\textbf{0.047}$ & ${89.3}$ & {$92.2$} & ${93.4}$ \\
LP-FAIR & $0.194$ & $0.176$ & $0.057$ & $\textbf{92.3}$ & $\textbf{96.3}$ & $\textbf{97.9}$ \\
\bottomrule
\end{tabular}
}
\caption{The effect of removing fairness constraint from LP-FAIR. The better performances are in bold. As expected, our algorithm makes the clusters more costly while having more individually fair clusters.}
 \label{tab::fairness_effect}
 \end{table}

\subsection{Macro Fairness on Varying Number of Clusters \label{sec::macro_varying_cluster_count}}

We also evaluate the impact of the number of clusters on the macro fairness metric. Figure \ref{fig::different_k_macro_only} shows that LP-FAIR performs better or has comparable results to the baselines. Macro Fairness tends to decrease as the number of clusters increases. That is because $k$ affects the cluster imbalance. Eventually, this makes some clusters less fair and decreases the average score.

\section{Reproducibility}

The code and datasets are available anonymously here: \href{https://anonymous.4open.science/r/lp-fair}{https://anonymous.4open.science/r/lp-fair}



\end{document}